\newenvironment{Method}[1][htb]
  {
   \begin{algorithm2e}[#1]%
  }{\end{algorithm2e}}
\newtheorem{theorem}{Theorem}
\newcommand{\norm}[1]{\left\lVert #1 \right\rVert}
\newcommand{\figwidth}{.9}
\newcommand{\bannerfig}{0.45}
\newcommand{\threesubfig}{.37}
\definecolor{macorchid}{HTML}{7A81FF}
\definecolor{macgrey}{HTML}{929292}
\definecolor{macpurple}{HTML}{663399}
\definecolor{rossocorsa}{rgb}{0.83, 0.0, 0.0}
\definecolor{darkpastelred}{rgb}{0.76, 0.23, 0.13}
\DeclareMathOperator*{\E}{\mathbb{E}}
\newcommand{\clname}{Chung-Lu}
\newcommand{\er}{\texttt{ER}}
\newcommand{\pa}{\texttt{PA}}
\newcommand{\cl}{\texttt{CL}}
\newcommand{\hd}{high-degree}
\newcommand{\ld}{low-degree}
\newcommand{\baname}{Barabási–Albert}
\DeclareMathOperator*{\argmin}{argmin}
\DeclareMathOperator*{\argmax}{argmax}
\title{GAEA: Graph Augmentation for Equitable Access via Reinforcement Learning}
\author {
        Govardana Sachithanandam Ramachandran\textsuperscript{\rm 1},
        Ivan Brugere\textsuperscript{\rm 2} \thanks{I.~Brugere and L.~R.~Varshney completed this work while they were with Salesforce Research.},
        Lav R.\ Varshney\textsuperscript{\rm 3*} , and 
        Caiming Xiong\textsuperscript{\rm 1}  \\ 
}
\begin{document}

\nocopyright
\maketitle

\begin{abstract}
Disparate access to resources by different subpopulations is a prevalent issue in societal and sociotechnical networks. For example, urban infrastructure networks may enable certain racial groups to more easily access resources such as high-quality schools, grocery stores, and polling places. Similarly, social networks within universities and organizations may enable certain groups to more easily access people with valuable information or influence. Here we introduce a new class of problems, Graph Augmentation for Equitable Access (GAEA), to enhance equity in networked systems by editing graph edges under budget constraints.  We prove such problems are NP-hard, and cannot be approximated within a factor of $(1-\tfrac{1}{3e})$. We develop a principled, sample- and time- efficient Markov Reward Process (MRP)-based mechanism design framework for GAEA. Our algorithm outperforms baselines on a diverse set of synthetic graphs. We further demonstrate the method on real-world networks, by merging public census, school, and transportation datasets for the city of Chicago and applying our algorithm to find human-interpretable edits to the bus network that enhance equitable access to high-quality schools across racial groups. Further experiments on Facebook networks of universities yield sets of new social connections that would increase equitable access to certain attributed nodes across gender groups.

\end{abstract}
\section{Introduction}

Designing systems and infrastructures to enable equitable access to resources has been a longstanding problem in economics, planning, and public policy. A classical setting where equity is a strong design criterion is facility location for public goods such as schools, grocery stores, and voting booths \cite{MumphreySW1971,McAllister1976, Mandell1991}.  Since individual-level fairness is often impossible for these spatially-structured problems, the focus has been on group-level fairness; several metrics have been proposed and optimized \cite{MarshS1994}, including simultaneous optimization of several metrics \cite{GuptaJRYZ2020}.  In this paper, we take up the same challenge of equitable access to resources, but for network-structured rather than spatially-structured problems.  We specifically consider editing the edges of graphs under budget constraints to improve equity in group-level access, under a diffusion model of mobility dynamics.  We use the \textit{demographic parity} metric \cite{8452913}.  In economics, a common measure of (in)equity called the \emph{Gini index} is often used to characterize the statistical dispersion in income or wealth, but also access to resources at individual or group levels \cite{ColeBCN2018,MalakarMP2018}, which we also measure.

Facility location problems (not considering equity) have been well-studied in network settings for applications in epidemiology \cite{7393962, Zhang2019}, surveillance \cite{krause_leskovec}, and influence maximization  \cite{im_survey, 10.1145/956750.956769}. These involve changing node properties of a graph; contrarily we edit the edges of graphs.  
Formally we prove our graph editing problem---Graph Augmentation for Equitable Access (GAEA)---is a generalization of facility location and that it is NP-hard.  

In urban transportation networks,  different racial, ethnic, or socioeconomic groups may have varying access to high-quality schools, libraries, grocery stores, and voting booths, which in turn lead to disparate health and educational outcomes, and political power. Interventions to enhance equity in transportation networks may include increasing public transit for the most effective paths to resources such as high-quality schools. From an implementation perspective, changing bus routes may be easier than relocating schools. We do not consider strategic aspects of congestion in transportation networks \cite{YounGJ2008}.

In social networks within organizations like schools, people in different racial or gender groups may have varying access to specific people that hold valuable information or have significant influence, whom we cast as network resources.  This in turn may lead to disparate outcomes in social life.  Interventions to enhance equity include encouraging friendship between specific individuals in the social network.  This can be done, e.g.\ in university settings by offering free meals for two specific people to meet: what we call \emph{buddy lunches}. 

Since AI models are  embedded in numerous consequential sociotechnical systems, ensuring equity in their operation has emerged as a fundamental challenge  \cite{dwork2018fairness,mehrabi2019survey}.  Of more relevance to us, however, AI techniques are being used to design equitable public policies e.g.\ for taxation \cite{Zheng_ea2020}.  Here we use AI methods for system/infrastructure redesign to increase equitable access to resources for settings such as transportation networks \cite{HayT1991,BowermanHC1995} and social networks \cite{AbebeIKL2019, FishBBFSV2019}.  In particular, we develop a  Markov  Reward  Process  (MRP) framework and a principled, sample- and time-efficient reinforcement learning technique for the GAEA optimization.\footnote{Code, and data: \url{https://github.com/salesforce/GAEA}} Our approach produces \textit{interpretable}, localized graph edits and outperforms deterministic baselines on both synthetic and real-world networks.

\begin{figure}[t]
      \centering
  \begin{subfigure}{\bannerfig\columnwidth}

  \includegraphics[width=\columnwidth]{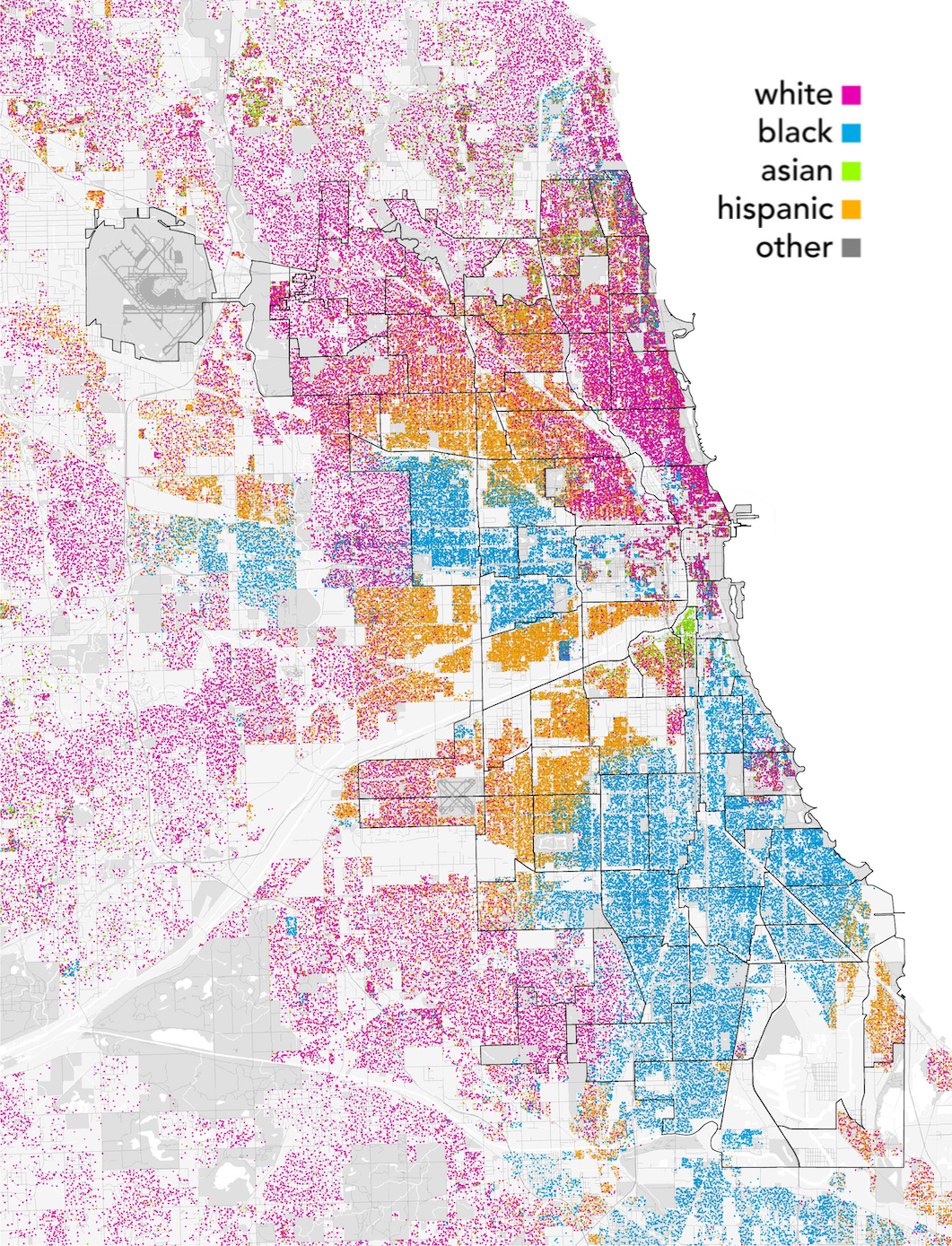}
  \caption{Chicago demographics by race/ethnicity}
  \label{subfig:demo_1}
  \end{subfigure}
   \begin{subfigure}{\bannerfig\columnwidth}
    \centering
        \includegraphics[width=\columnwidth]{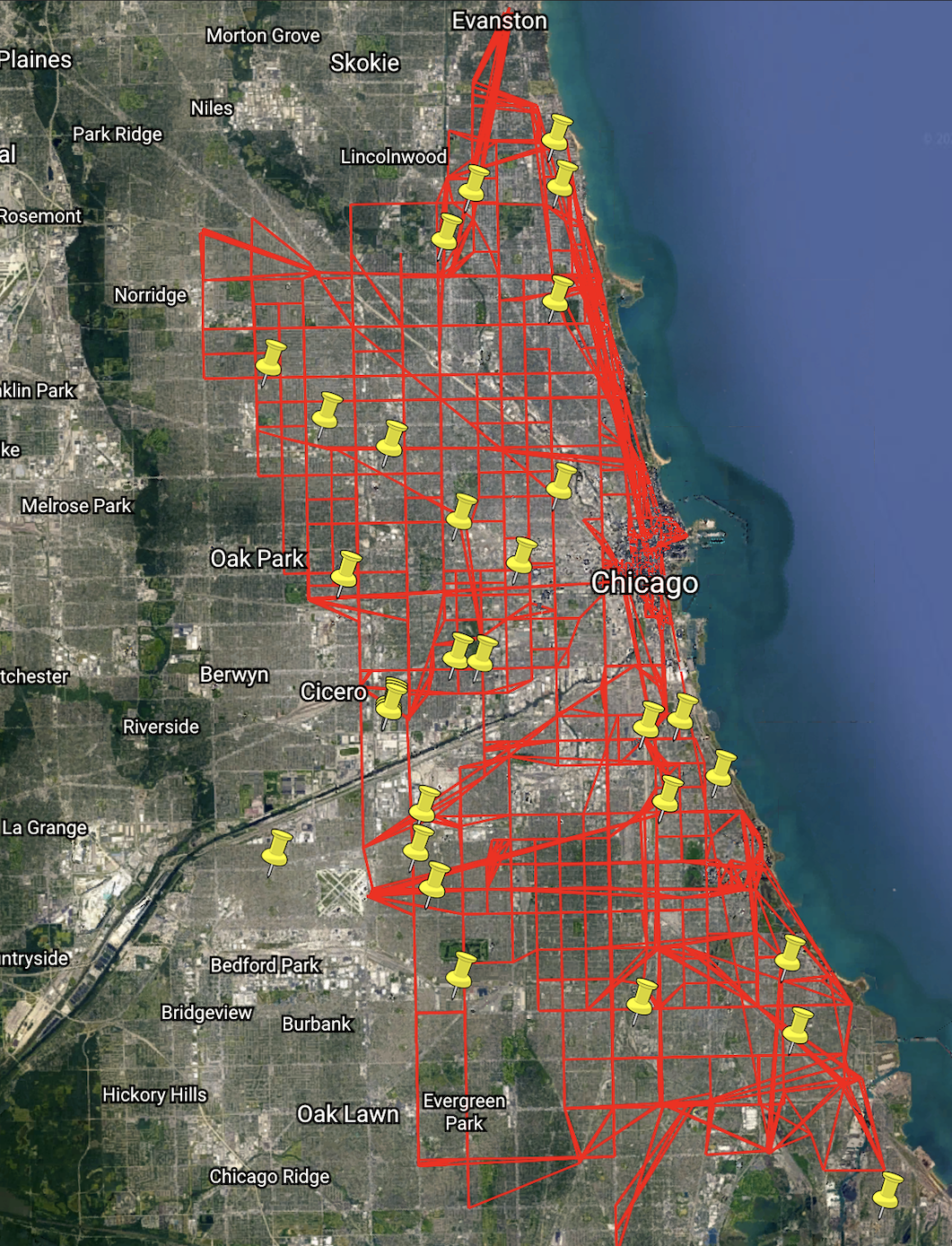}
  \caption{Chicago transit network and school locations}
  \label{subfig:demo2}
  \end{subfigure} 
  \caption{Chicago demographics and infrastructure, (a) shows demographics, demonstrating highly segregated areas of the city by race and ethnicity; (b) shows a transit network (red) we collected for this work, induced from Chicago Transit Authority bus routes. We also show (yellow) the location of schools within our dataset from the Chicago Public Schools.}\label{fig:demo_net}
\end{figure}
We demonstrate the effectiveness of our approach on real-world transportation network data from the City of Chicago. Chicago is the most spatially segregated city in the United States via 2010 census data \cite{logan2014diversity}. At the same time, prior work shows this segregation yields significant disparity in education \cite{groeger2018miseducation} and health \cite{doi:10.2105/AJPH.2009.165407} outcomes by race and ethnicity, particularly among White, Black, and Hispanic communities. In the Chicago Public School District $299$---constituting the entire city of Chicago---White students are the equivalent of $3$ academic years ahead of Black students, and $2.3$ years ahead of Hispanic students. Our technique yields an edited graph which reduces disparity in physical access to high-quality schools.

We also present a similar analysis for gender in university social networks using the node-attributed Facebook100 network data \cite{TraudMP2012}.




\subsection{Contributions}

Our main contributions are as follows. 
\begin{itemize}
    \item The novel Graph Augmentation for Equitable Access (GAEA) problem, which generalizes facility location, and which we prove is NP-hard.
    \item Markov Reward Process framework to address this difficult optimization problem, yielding  principled, sample- and time-efficient  mechanism design  based on reinforcement learning.
    \item Demonstration of efficacy in both synthetic and real-world networks (transportation and social), showing performance improvement over deterministic baselines. 
\end{itemize}


\subsection{Related Work}
\phantomsection

\subsubsection{Equity in AI}
Recent work on fairness in machine learning has expanded very rapidly. \citet{mehrabi2019survey} outline $23$ definitions of bias introduced from underlying AI models, $6$ definitions of discrimination (i.e.\ the prejudicial and disparate impacts of bias), and $10$ definitions of fairness which mitigate these impacts. There is a similar zoo of definitions for equity in facility location \cite{MarshS1994}.  
We focus here on demographic parity. 


This is the first work measuring and mitigating inequity within an arbitrarily-structured graph environment.

\subsubsection{Graph Diffusion, Augmentation, and Reinforcement Learning}
Our formulation of equitable access lies within a larger body of work on diffusion in graphs. 
Prior work has examined  event detection time in sensor networks \cite{krause_leskovec}, or influence maximization in social networks \cite{im_survey, 10.1145/956750.956769}. The problem can also be viewed as maximization of graph connectivity, maximization of spectral radius, or maximization of closeness centrality of reward nodes \cite{mci}.

Graph augmentation is a class of combinatorial graph problems to identify the minimum-sized set of edges to add to a graph to satisfy some connectivity property, such as $k$-connectivity or strong connectedness \cite{doi:10.1137/0222056}: our objective is not such a straightforward combinatorial property. 

Prior work in graph reinforcement learning primarily focuses on coordinating agents with respect to local structure for cooperative and/or competitive tasks \cite{NIPS2019_9024}.  Our work differs in that we learn a system design policy rather than coordination among active agents.

\section{Problem Definition}
\begin{figure}[t]
      \centering
  \begin{subfigure}{\bannerfig\columnwidth}\includegraphics[width=\columnwidth]{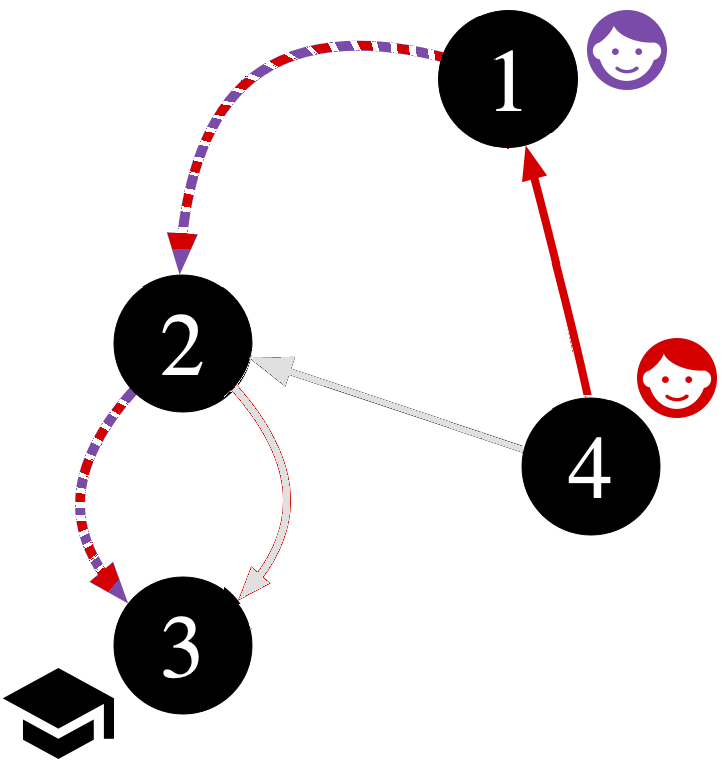}
  \label{subfig:schematic_1}
  \end{subfigure}
   \begin{subfigure}{\bannerfig\columnwidth}\includegraphics[width=\columnwidth]{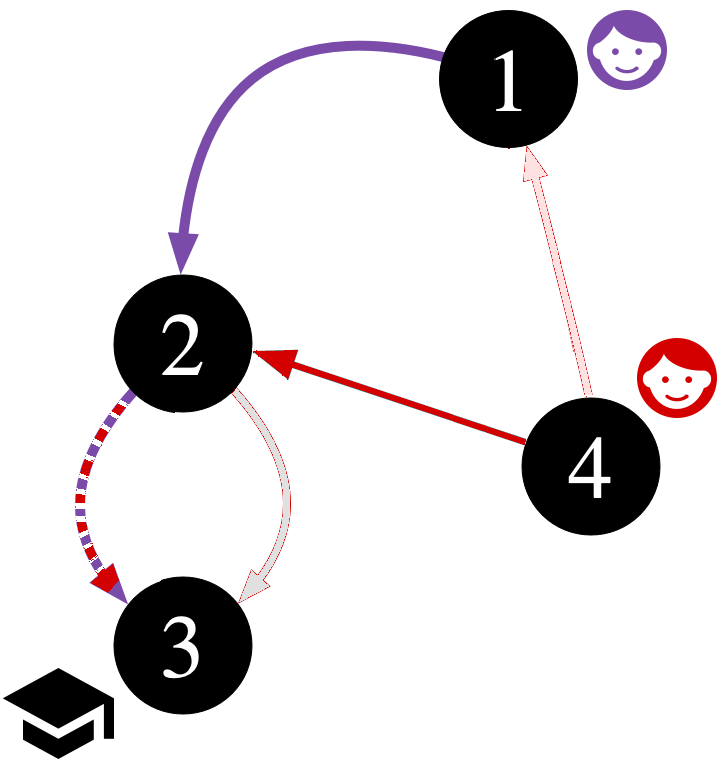}
  \label{subfig:schematic_2}
  \end{subfigure} 
  \caption{A before and after equitable graph augmentation of a toy schematic showing school access between two disparate populations distributed spatially different from each other. Individuals from the group traverse along the edges of their respective color. }\label{fig:schematic}
\end{figure}
\subsection{Toy Example}
Figure \ref{fig:schematic} shows a toy example of our problem. Assume we have a fixed topology where new edges can form, shown in the figure as gray arrows. We have two distinct population of purple and red individuals distributed differently over the nodes of the graph. Based on the graph edges, each group has different levels of access to the same resources, in this case access to schools. The individuals of purple population can access the school at node 3 in two hops 1-2-3, whereas individuals of red population have to traverse one extra hop i.e.\ 4-1-2-3 to access the same resource.

Suppose we have a budget to edit one edge, forming the edge 4-2 will match the red and purple population's access to school. If we have one more edge in the budget, augmenting the additional edge 2-3 will improve access for both populations equitably. This sequence of edge augmentations that improve resource access for the overall population and at the same time maintains equity among subpopulations is what this work strives to achieve. We restrict edge formation to a predefined topology, since arbitrary edge formations are not feasible in many real sociotechnical graphs, e.g.\ in spatial graphs, an edge between two nodes that are arbitrarily far apart cannot be formed.
\subsection{Preliminaries}

Let graph $\mathbb{G} = (V, E)$ have vertex set $V = \{v_1,\dots,v_n\}$ of size $n$ and edge set $E = \{e_{i,j}\}$ of size $m$. Edges are  unweighted and directed. Let $G$ be a set of groups such as racial or gender groups. Let \textit{reward nodes} be a subset of nodes $R \subseteq V$.

We consider a particle $p_g \in V$ of group $g\in G$ as an instance of starting node positions sampled from a group-specific distribution $\mu_0(g)$. 
Letting $d(p_g,r)$ be the shortest path for particle $p_g$ along edges $E$ of $\mathbb{G}$ to reach a reward node in $R$, we define a \textit{utility function} for each group as: 
\begin{equation}
{
    u_g(p_g;E) =  {\E}_{{p_g \sim \mu(g)}} \tfrac{1}{d(p_g,r)} \mbox{.}}
\end{equation}
The utility function is parameterized by the edge set $E$. For simplicity, we refer to $u_g(p_g;E)$ as just $u_g$ and define the utility of the entire group $g$ as:
\begin{equation}{
U_g = {\E}_{{{p_g\sim\mu(g)}}}[u_g] \mbox{.}}
\end{equation}
We define $\overline{U}_G = \sum_{c \in G}U_g/|G|$ as mean utility of all groups and inequity as deviation of group utilities from the mean. i.e $ \sum_{g \in G}{|U_g - \overline{U}_G|}$. To minimize inequity is to minimize this difference. Finally, let a graph augmentation function for a budget $B$ be defined as:  
\begin{equation}{
e :\mathbb{G}, B \rightarrow \mathbb{G}'} 
\end{equation}
where $\mathbb{G}'=(V,E\cup E^u)$, $E^u$ is the edge augmentation to the graph $\mathbb{G}$ constrained by budget $B$, under Hamming distance  $\mathcal{D}(\mathbb{G} - \mathbb{G}') < B$.


\section{Methods}
Now let us formalize our problem definition.
\DontPrintSemicolon
\begin{algorithm2e}
  
  \SetAlgorithmName{Problem}
  \KwGiven{A graph $\mathbb{G}$, budget B}
  \KwFind{$\mathbb{G}' = e(\mathbb{G}, B)$}
  \KwWhere{
    $\mathbb{G}^\prime  = \argmax\limits_{\substack{s.t. \sum_{g \in G}{|U_g - \overline{U}_G|} = 0 \\ {{\mathcal{D}}(\mathbb{G} - \mathbb{G}') < B}}}{\E}_{{g\in G}}[U_g]$}
  \caption{{Graph Augmentation for Equitable Access}}
  \label{algorithm:cege}
\end{algorithm2e}
\subsection{Greedy Baseline}
For prior baseline, to the best of our knowledge we are not aware of earlier works that does budgeted discrete equitable graph augmentation that maximize utility of disparate groups. That said, we observe that without equitable access across groups, optimizing for maximum utility, $U_g$ alone reduces the problem to maximizing centrality of the reward nodes $r \in R$. Hence we extend the greedy improvement for any monotone submodular $U_g$ proposed by \cite{mci} to the equitable group access setting, which we call Greedy Equitable Centrality Improvement (GECI) baseline. For a given edge set $E$, we define neighborhood $N_g(E)$ as all nodes $u\in V$ and $u \not \in E$ that are in the shortest path to reward node $r$ less than $T$ steps for candidate from group $g$. With budget $B$, the GECI method is defined in Method \ref{algorithm:greedyMCI}.
\begin{Method}
    
    \textbf{Input: }{A directed graph $\mathbb{G} = (V,E)$, neighborhood function $N_g$, and budget $B$} \;
    $E^u := \emptyset $\;
     \For{ $b = 1,2,\dots,B$}{
        $E^e := E\cup E^u$
        $g_{\min} := \argmin{\{U_g(E^e)|g\in G)\}}$\;
        \For{ $u \in V | u \in N_{g_{min}}(E^e) $}{
            \For{$v \in V | v \in E^e $}{
                Compute $U_{g_{\min}}(E^e \cup \{(u,v)\})$
            }
        }
        $u_{\max},v_{\max} := \argmax{\{U_{g_{\min}}(E^e\cup \{(u,v)\})\}}$ \;
        $E^u := E^u \cup \{(u_{\max},v_{\max})\} $
     }
     \Return $E^u$
     \caption{{ Greedy equitable centrality improvement}}
     \label{algorithm:greedyMCI}
\end{Method}

\normalsize
For every augmentation of edge $E^u$, we pick the group $g_{\min}$ that is most disadvantageous or in other words the group with least group utility $U_g$. We then pick a pair of nodes $(u_{\max},v_{\max})$ to form an edge augmentation, such that nodes $v_{\max}$ and $u_{\max}$ are in $E$ and in the neighborhood $N_{g_{min}}$ respectively and result in maximum change in most disadvantageous group's utility $U_{g_{\min}}$, for the candidate group $g_{\min}$, We update the edge augmentation set $E^u := E^u\cup \{(u_{\max},v_{\max})\}$. The graph augmentation step is repeated until the budget $B$ is exhausted.  

\subsection{Optimization Formulation}
Let us consider the GAEA problem from an optimization perspective. Let $U_g$ be the expected utility of a group. Then Pareto optimality of utilities of all groups can be framed as: 
\begin{equation}{ 
J  = \max\limits_{\substack{s.t. \sum_{g \in G}{|U_g - \overline{U}_G|} = 0 \\ \sum_{g \in G}{\norm{E^u}_0 < B}}}{\E}_{{g\in G}}[U_g] \label{J_d}} \mbox{.}
\end{equation}
The first constraint in \eqref{J_d} is the equity constraint while the second is the budget constraint. The constraints are non-differentiable, especially the number of discrete edges to edit, and cannot be solved directly using typical algorithms. Hence we develop our learning approach.

\subsection{Equitable Mechanism Design in MRP}

We frame the graph, $\mathbb{G} = (V,E)$, and dynamic process of reaching reward nodes by particles of different group, $g \in G$, as mechanism design of a finite-horizon Markov Reward Process (MRP). The MRP consists of a finite set of
states, $S$; dynamics of the system defined by a set of Markov state transition probability $P$; a reward function, $R \in \mathbb{R}^{|S|}$; and a horizon defined by the maximum time step $T$ reachable in a random walk. Here states $S$ corresponds to vertices $V$ of the graph $\mathbb{G}$, and the transition dynamics are parameterized by $P = D^{-1}E$, with diagonal matrix $D(i,i)=\sum_{j}E_{(i,j)}$. Unlike Markov Decision Processes (MDP), MRP does not optimize for a policy, instead optimizing for dynamics that maximize the state value function.

The state value function of the MRP for a particle, $p_g$ of group $g$, spawned at state $s_0$ is given by:
\begin{equation}
{
v_g(s_0) = \sum_{t = 0}^{T-1} \gamma^tRP^ts_0 \mbox{,}}
\end{equation}
where $\gamma \in [0,1]$ is the discount factor. This $\gamma$ encourages the learning system to choose shorter paths reachable under the horizon $T$. The expected value function for the group, $g$, is:
\begin{equation}
\label{eqn_Vg}{
V^g = {\E}_{{s_0\sim\mu(g)}}[v^g(s_0)] \mbox{.}}
\end{equation}

We parameterize the transition probability as 
$
P = D^{-1}E^e
$,
where 
$E^e = E +  A \odot E^u$
and diagonal matrix $D$ satisfies $D(i,i)=\sum_{j}E^e(i,j)$.

Here $E \in \{0,1\}^{|S|\times|S|}$ is the edge-adjacency matrix of the unedited original graph $\mathbb{G}$. Further, $A \in \{0,1\}^{|S|\times|S|}$ is a mask adjacency matrix corresponding to given topology, used to restrict the candidate edges for edit. This restriction is useful for spatial graphs where very distant nodes cannot realistically form an edge.  The set $E^u \in \{0,1\}^{|S|\times|S|}$ is the learned discrete choice of edges that are augmented. To make discrete edge augmentation $E^u$ differentiable, we perform continuous relaxation, with reparameterization trick using Gumbel sigmoid \cite{jang2016categorical}, defined by 
\begin{equation}
\label{eqn_Eg}{
E^u{(i,j)} = \frac{1}{(1+\exp(-(\phi(\Vec{0}) + g_{i})/\tau)}  ,\forall i,j \in S}
\end{equation}
where, $g_i = -\log(-\log(\mathbb{U}))$ and $\mathbb{U}$ are the Gumbel noise and uniform random noise respectively. Over the period of training, the temperature $\tau$ is attenuated. As $\tau \to 0 $, $E_g^u$ becomes discrete. Hence we gradually attenuate $\tau \gets \tau * \nu$ at every epoch with a decay factor $\nu$ . Note that the function $\phi(\cdot)$ in \eqref{eqn_Eg} takes the zero vector as input, which  effectively forces the function to learn only the bias term and makes the choice of edits independent of the input state. The problem objective under MRP framing is:
\begin{equation}
\label{eqn_Eg_opt}{
E^u  = \argmax\limits_{\substack{s.t. \sum_{g \in G}|V_g - \overline{V}_G| = 0 \\ \sum_{g \in G}\norm{E^u}_0<B}}\sum_{g \in G}V_g \mbox{.}}
\end{equation}
Here $\overline{V}_G = {\sum_{g \in G}V_g}/{|G|}$. We recast the constrained optimization as unconstrained optimization using augmented Lagrangian \cite{nocedal2006numerical}, as:
\begin{align}
    \label{eqn_Eg_uncon_opt}
    \begin{multlined}{
    J  = - \min_{E^u} \sum_{g \in G}{V^g} \\
    - \mu_1(\sum_{g \in G}{V_g - \overline{V_G})}^2 - \mu_2(\min(0,\sum_{g \in G}{\norm{E^u}_0 - B))}^2\\
    -\lambda_1(\sum_{g \in G}{|V_g - \overline{V_G}|)} - \lambda_2(\min(0,\sum_{g \in G}{\norm{E^u}_0 - B)}).}\end{multlined}
\end{align}
Here $\mu_1$ and $\mu_2$ are problem-specific hyperparameters. The Lagrangians of \eqref{eqn_Eg_uncon_opt}, $\lambda_1$ and $\lambda_2$, are updated at every epoch: 
\begin{equation}{
\lambda_1^{new} \gets \lambda_1^{old} + {\mu_1} (\sum_{g \in G}{|V_g - \overline{V}_G|})},
\label{eq:lambda1}
\end{equation}
\begin{equation}{
\lambda_2^{new} \gets \lambda_2^{old} + {\mu_2} (\min(0, \sum_{g \in G}\norm{E^u}_0 - B))}.
\label{eq:lambda2}
\end{equation}

This objective effectively learns the dynamics $P$, of the MRP. Since \eqref{eqn_Eg_uncon_opt} optimizes for Pareto optimality over multiple objectives, the resulting gradient of the stochastic minibatch will tend to be noisy. To prevent such noisy gradient, we train only on either the main objective or one of the constraints at any minibatch. We devise a training schedule where the objective $J$ is optimized without constraint and as it saturates we introduce the equity constraint followed by the edit budget constraint and finally as losses saturate, we force discretizing the edge selection by gradually annealing the temperature $\tau$ of the Gumbel sigmoid. These scheduling schemes are problem-specific and are hyperparameter-tuned for best results. We detail this scheduling strategy in Method \ref{algorithm:MechDesign}. 

\subsubsection{Facility Location as Special Case}
\label{subsub:facility}
An alternative to augmenting edges $E^u$ in a graph $\mathbb{G}$ is to make resources equitably accessible to particles $p_g~\mu(g)$ of different groups $g \in G$ by selecting optimal placement of reward nodes without changing the edges: a facility location problem \cite{krause_leskovec, FishBBFSV2019}. With minor changes to our MRP framework, equitable facility placement can be solved. Specifically, the dynamics of the MRP, $P$ are fixed and the objective is parameterized by the reward vector $R \in \{0,1\}^{|S|}$. The  optimization in \eqref{eqn_Eg_opt} can be adapted to: 
\begin{equation}{
R^u  = \argmax\limits_{\substack{s.t. \sum_{g \in G}|V_g - \overline{V}_G| = 0 \\ \norm{R}_0<B}}\sum_{g \in G}V_g} 
\end{equation}
\begin{equation}
{
R^u(s) = \tfrac{1}{(1+\exp(-(\phi(\Vec{0}) + g_{s})/\tau)}  ,\forall s \in S}
\end{equation}
Besides a small change to the objective, the original MRP framework works for equitable facility placement as well.

{\SetAlgoNoLine%
\begin{Method}[h]
     
    \textbf{Input: }{The original weight matrix $W_g^0$} \;
    \textbf{Initialize} $\tau =1,\lambda_1 =0,\lambda_2 =0,\mu_1,\mu_2, \alpha$\;
    $update\_constraint = True$ \;
     \For{until convergence}{
      \For{Each ADAM optimized minibatch}{
          Sample $s_0\sim\mu(g)$\;
          \eIf{update\_constraint}
          {
            $\theta  \gets \theta - \alpha \nabla_{\theta} (- \mu_1(\sum_{g \in G}{|V_g - \overline{V_G}|)}^2 + \lambda_1(\sum_{g \in G}{|V_g - \overline{V_G}|))}$\;
            $\theta  \gets \theta - \alpha \nabla_{\theta} (- \mu_2(\min(0,\sum_{g \in G}{E_g - B))}^2 - \lambda_2(\min(0,\sum_{g \in G}{E_g - B})) )$\;
          }
          {
            $\theta  \gets \theta - \alpha \nabla (\sum_{g \in G}{V^g}) $\;
          }
          update\_constraint $\gets$ not\ update\_constraint\;
          
          \If{equity schedule condition is met}
          {
            Update $\lambda_1$ (Equation \ref{eq:lambda1})\; 
          }
          \If{edit schedule condition is met}
          {
            Update $\lambda_2$ (Equation \ref{eq:lambda2})\;  
          }
          \If{temperature schedule condition is met}
          {
            $\tau \gets \tau * \nu$\;
          }
      }
     }
     \caption{Equitable mechanism design in MRP}
     \label{algorithm:MechDesign}
\end{Method}}
\normalsize

\section{Theoretical Analyses}
\phantomsection

\subsection{Computational Complexity of GAEA}

\begin{theorem}
The GAEA problem is in class NP-hard that cannot be approximated within a factor of $(1-\frac{1}{3e})$.
\end{theorem}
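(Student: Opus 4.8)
The plan is to prove NP-hardness by reduction from the maximum coverage problem, which is the natural vehicle for establishing both inapproximability and the specific constant $(1-\tfrac{1}{e})$-type bounds. First I would show that GAEA generalizes facility location: as established in the Facility Location as Special Case subsection, fixing the dynamics $P$ and optimizing the reward placement $R^u$ under a cardinality budget recovers an equitable facility location instance. Since facility location (equivalently, maximum coverage) is already NP-hard, this embedding gives NP-hardness of the general GAEA problem essentially for free. The reduction should map a universe of elements and a collection of candidate sets onto a bipartite-style graph where particle source distributions $\mu(g)$ encode the elements to be covered and candidate edge augmentations (or reward placements) in $E^u$ encode the sets, so that maximizing $\E_{g\in G}[U_g]$ corresponds to maximizing the number of elements reachable within the horizon.

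Next I would address the inapproximability constant. The classical result of Feige shows maximum coverage cannot be approximated beyond $(1-\tfrac{1}{e})$ unless $\mathrm{P}=\mathrm{NP}$. The claimed factor here is $(1-\tfrac{1}{3e})$, which is \emph{weaker} (larger, hence easier to beat) than the pure coverage bound, so the degradation from $e$ to $3e$ must come from the equity constraint interacting with the coverage objective. My approach would be to construct a gadget in which the equity term $\sum_{g\in G}|U_g-\overline{U}_G|=0$ forces the optimizer to split its budget across groups in a way that prevents concentrating all chosen edges on the single best-covering group. I would quantify how much utility is unavoidably sacrificed to maintain parity — intuitively a constant fraction, the factor of $3$ — and combine this with Feige's hardness of the underlying coverage sub-instance to obtain the composite $(1-\tfrac{1}{3e})$ bound.

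The key steps in order would be: (i) formally define the decision version of GAEA and verify membership-style hardness via the facility-location special case; (ii) exhibit an explicit polynomial-time reduction from maximum coverage, specifying how groups, source distributions, the topology mask $A$, the horizon $T$, and the budget $B$ are set from a coverage instance; (iii) prove the reduction is gap-preserving, i.e.\ a good GAEA solution yields a good coverage solution and vice versa, tracking the exact multiplicative loss introduced by enforcing the equity constraint; and (iv) invoke Feige's theorem to transfer inapproximability through the reduction, arriving at the stated $(1-\tfrac{1}{3e})$ factor.

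The hard part will be step (iii) — controlling the gap amplification so that the equity constraint contributes exactly the factor turning $\tfrac{1}{e}$ into $\tfrac{1}{3e}$, rather than some other constant. Getting the gadget to simultaneously (a) preserve the coverage structure so Feige's bound applies cleanly, and (b) make the parity requirement bite in a precisely quantifiable way, is delicate: one must ensure the equity constraint is neither vacuous (which would recover the stronger $(1-\tfrac{1}{e})$ bound and contradict the weaker claim) nor so restrictive that feasibility itself becomes the bottleneck. I would therefore design the source distributions $\mu(g)$ and reward set $R$ so that achieving perfect equity costs a controlled, constant-factor portion of the achievable utility, and verify that this cost is stable under the approximation gap so that the composition of losses yields exactly $(1-\tfrac{1}{3e})$.
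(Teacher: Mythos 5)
Your overall strategy---restrict GAEA to a special case, reduce from a set-cover-type problem, and invoke Feige---is the right shape, but two of your concrete choices diverge from what actually works, and one of them is a genuine gap. First, you route the hardness through the facility-location special case (optimizing reward placement $R^u$ with $P$ fixed). That variant is not a restriction of GAEA in the sense needed for hardness-by-restriction: it changes the decision variables from edge augmentations $E^u$ to reward placements $R^u$, so its hardness does not transfer to the edge-editing problem. Worse, the paper separately notes that equitable facility placement is submodular and admits a greedy $(1-\tfrac{1}{e})$-approximation, so that special case cannot be the source of an inapproximability bound for GAEA. The correct restriction keeps the edge-augmentation variables: take a \emph{single} group (so the equity constraint is vacuous), a single reward node, and drop the horizon constraint; the objective then becomes maximizing the closeness $c_r = 1/d_{vr}$ of the reward node by adding at most $B$ edges, which is exactly the Maximum Closeness Improvement Problem of Crescenzi et al., already known to be NP-hard and inapproximable within $(1-\tfrac{1}{3e})$ via Maximum Set Cover and Feige's theorem.

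Second, and more importantly, your explanation of the constant is wrong: you attribute the degradation from $\tfrac{1}{e}$ to $\tfrac{1}{3e}$ to the equity constraint and propose a gadget in which parity forces the optimizer to sacrifice a third of its utility. The paper's proof drops the equity constraint entirely (one group), and the factor $3$ is inherited from the closeness-improvement reduction itself---it is a property of how closeness gains compose in the set-cover gadget for \emph{edge additions}, not of any fairness requirement. Your proposed step (iii), engineering the source distributions so that ``achieving perfect equity costs a controlled, constant-factor portion of the achievable utility'' equal to exactly $\tfrac{2}{3}$ of the gap, has no evident construction and is not needed: hardness of a special case suffices for hardness of the general problem, so the cleanest proof makes the equity constraint trivially satisfied rather than load-bearing.
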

\begin{proof}
Consider a subproblem of GAEA: maximization of expected utility of a single group and hence no constraints on equity. Let us assume there is only one reward node $r \in V$ and drop the constraint of the node being reachable in $T$ steps. Now the problem reduces to augmenting a set of edges,  $E^u=\{E^u(i,j)\not\in E\}$, to improve closeness $c_r = 1/d_{vr}$ of reward nodes $r$. Here $d_{vr}$ is distance to vertex $v$ from reward node $r$; the optimization problem reduces to
\begin{equation}{
    E^u  = \argmax\limits_{\substack{s.t. \norm{E^u}_0<B }}{c_r}}\mbox{.}
\end{equation}
The GAEA problem now reduces to the Maximum Closeness Improvement Problem \cite{mci} which through Maximum Set Cover, has been proven to be NP-hard and cannot be approximated within a factor of $(1-\frac{1}{3e})$, unless P = NP \cite{feige1998threshold}.
\end{proof}

\subsection{Computational Complexity of Facility Placement} Facility placement is proven to be submodular \cite{krause_leskovec}. For the unit-cost case there exists a greedy solution that is $(1-\frac{1}{e})$-approximate. There is a tighter problem-dependent $\frac{1}{2}(1-\frac{1}{e})$ bound \cite{krause_leskovec}.

\subsection{Complexity Analysis}
\phantomsection

\subsubsection{EMD-MRP}
Here we analyze the time complexity of each minibatch while training in our MRP. The complexity of the forward pass is mainly in estimating the expected value function of each group $v_g(s_0) = \sum_{t = 0}^{T-1} \gamma^tRP^ts_0$, which is dominated by the computation of $P^ts$. This can be done by recursively computing $Ps$ for $T$ time steps, resulting in minibatch complexity $O(B \cdot |G| \cdot T \cdot |V|^2)$. Alternatively $P^t$ can be computed once every minibatch, with complexity $O(|V|^{2.37})$ \cite{matrixMul}, resulting in overall complexity of $O(T \cdot |V|^{2.37} + \mathbb{B} \cdot |G| \cdot |V|^2)$. For large networks $|V| \gg BGT$, and the complexity reduces to $O(|V|^2)$.

\subsubsection{GECI}
Introducing group equity into the work of \citet{mci}, the time complexity of greedy improvement strategy becomes $O(B \cdot |E| \cdot |V| \cdot |G| \cdot O( U_g ))$, where $O( U_g )$ is the complexity of computing the utility of a group.

\subsection{On Mixing Time}
Here we analyze the mixing time of EDM-MRP formulation. Let us add a virtual absorption node $r_a$ to the graph $\mathbb{G}$, such that all reward nodes $r \in R$ almost surely transition to $r_a$.
The state distribution at time step $t$ is given by $s_t = P^ts_0$. At optimality, in a connected graph, the objective is to have all nodes reach a reward node within timestep $T$ and therefore reach absorption node $r_a$ within $T+1$ timesteps, which results in a steady-state distribution $\lim_{t \to \infty} s_t = r_a$.
The convergence speed of $s_0$ to $r_a$ is given by the asymptotic convergence factor \cite{xiao2004fast}:
\begin{equation}
    {
    \rho(P) = \underset{s_0 \neq r_a}{\sup} \underset{t \to \infty}{\lim} {\Big( \frac{{||s_{t}-r_a||}_2}{{||s_{0}-r_a||}_2}\Big)}^{1/t} , s_0\sim\mu(g) \forall g \in G}
\end{equation}
and associated convergence time
\begin{equation}{
T + 1 =\tfrac{1}{\log(1/\rho)}} \mbox{.}   
\end{equation}
While discount factor $\gamma$ relates to expected episode length $T$ \cite{discountfactorepisodelen} by
\begin{equation}{
T =\tfrac{1}{1-\gamma}} \mbox{.}  
\label{T_gamma}
\end{equation}
Let $\rho_{g_{adv}}$ and $\rho_{g_{dis-adv}}$ be the convergence factors corresponding to the most advantageous and disadvantageous group that require least $T_{g_{adv}}$ and most $T_{g_{dis-adv}}$ timesteps to $r_a$. The constraints of the optimization is to make $T_{g_{dis-adv}}$ match $T_{g_{adv}}$ with discrete edits to graph $\mathbb{G}$ less than budget $B$. For such matching to happen, from \eqref{T_gamma} the choice of $\gamma$ should be.
\begin{equation}{
\gamma \leq 1 - \tfrac{1}{T_{g_{adv}}}} \mbox{.}  
\label{opt_gamma}
\end{equation}
Hence the optimal dynamics $P^{*}$ is bounded by choice of the discount factor $\gamma$, in \eqref{eqn_Vg} of the MRP which in turn is influenced by the given graph's $\mathbb{G}$ inequity of convergence factors $\rho(P_{dis-adv})$ - $\rho(P_{adv})$ and budget $B$.

\section{Evaluation: Synthetic Graphs}

Many real-world sociotechnical networks can be approximated by synthetic graphs; hence we evaluate our method on several synthetic graph models that yield instances of graphs with a desired set of properties. We evaluate our proposed graph editing method with respect to the parameters of the graph model.
\phantomsection

\subsection{Erd\"{o}s-R\'{e}nyi Random Graph (\er{})}

The Erd\"{o}s-R\'{e}nyi random graph is parameterized by $p$, the uniform probability of an edge between two nodes. The expected node degree is therefore $p|N|$, where $|N|$ is the number of nodes in $G$. We use this model to measure the effectiveness of our method with varying graph densities. As the density increases, it will be more difficult to affect the reward of nodes through uncoordinated edge changes. 

\subsection{Preferential Attachment Cluster Graph (\texttt{PA})}

The Preferential Attachment Cluster Graph graph model \cite{holme2002growing} is an extension of the \baname{} graph model. This model is parameterized by $m$ added edges per new node, and the probability $p$ of adding an edge to close a triangle between three nodes. 

\subsection{\clname{} Power Law Graph (CL)}

The \citet{chung2002connected} model yields a graph with expected degree distribution of an input degree sequence $d$. We sample a power-law degree distribution, yielding a model parameterized by $\gamma$ for $P(k) \sim k^{-\gamma}$. This is the likelihood of sampling a node of degree $k$. In this model, $\gamma=0$ yields a random-degree graph and increasing $\gamma$ yields more skewed distribution (fewer high-degree nodes and more low-degree nodes). We use this graph model to measure performance with respect to node centrality. As $\gamma$ increases, routing is more likely through high-degree nodes (their centrality increases). We place rewards at high-degree nodes. 

\subsection{Stochastic Block Model (SBM)}

The Stochastic Block Model \cite{holland1983stochastic} samples edges within and between $M$ clusters. The model is parameterized by an $[M \times M]$ edge probability matrix. Typically, intra-block edges have a higher probability: $m_{i,i} > m_{i,j}, \textrm{where } j \neq i$.We use this model to measure performance at routing between clusters. In this setting, we instantiate two equal-sized clusters with respective intra- and inter-cluster probability: $[0.1, 0.01]$. We sample particles starting within each cluster. This experiment measures ability to direct particles into a sparsely connected area of the graph. This may be relevant in social or information graphs where rewards are only available in certain communities. 

\subsection{Edge and Particle Definitions}

For each graph ensemble, we create a graph edge set, which we then sample two or more edge-weight sets and sets of diffusion particles. For simplicity we will cover sampling two, for  \textbf{\textcolor{rossocorsa}{red}} and a \textbf{black} diffusion particles. For all synthetic experiments, for black diffusion particles we define edge weights proportional to node degree:
\begin{equation}{
w_{i,j} = \mathrm{deg}(i)\cdot\mathrm{deg}(j) \mbox{.}}
\end{equation}
For red particles, we define edge weights inversely proportional to degree nodes analogous to the above.   
For each diffusion step, a particle at node $i$ transitions to a neighboring node by sampling from the normalized distribution weight of edge incident to $i$. This  weighting means black particles probabilistically favor diffusion through high-degree nodes, whereas red particles favor diffusion through low-degree nodes. We use \textit{random} initial placement of particles within the graph. The difference in edge diffusion dynamics thus constitute bias within the environment.

\subsection{Problem Instances: Reward Placement}
\label{subsec:reward_placement}

For each synthetic graph ensemble, we specify two different problems by varying the definition of reward nodes on the graph. For the \textbf{\hd{}} problem, we sample $k=3$ nodes proportional to their degree: 
\begin{equation}{
P(i) = \tfrac{\mathrm{deg}(i)}{\sum_{j \in V}{\mathrm{deg}(j)}} \mbox{.}} 
\end{equation}
For the \textbf{\ld{}} problem, we sample $k = 3$ nodes inversely proportional to their degree, analogous to the above.
This means that especially in power-law graphs such as \pa{} and \cl{}, black particles which favor high-degree nodes are advantaged and should have a higher expected reward. However, we also hypothesize that black particles could be advantaged in the \ld{} placement, because routing necessarily occurs through high-degree nodes for graphs with highly skewed degree distributions. Overall, we hypothesize the \ld{} problem instance is relatively harder for graph augmentation methods.




\begin{figure*}[ht]
\centering
\begin{subfigure}{\threesubfig\textwidth}
  \includegraphics[width=\figwidth\columnwidth]{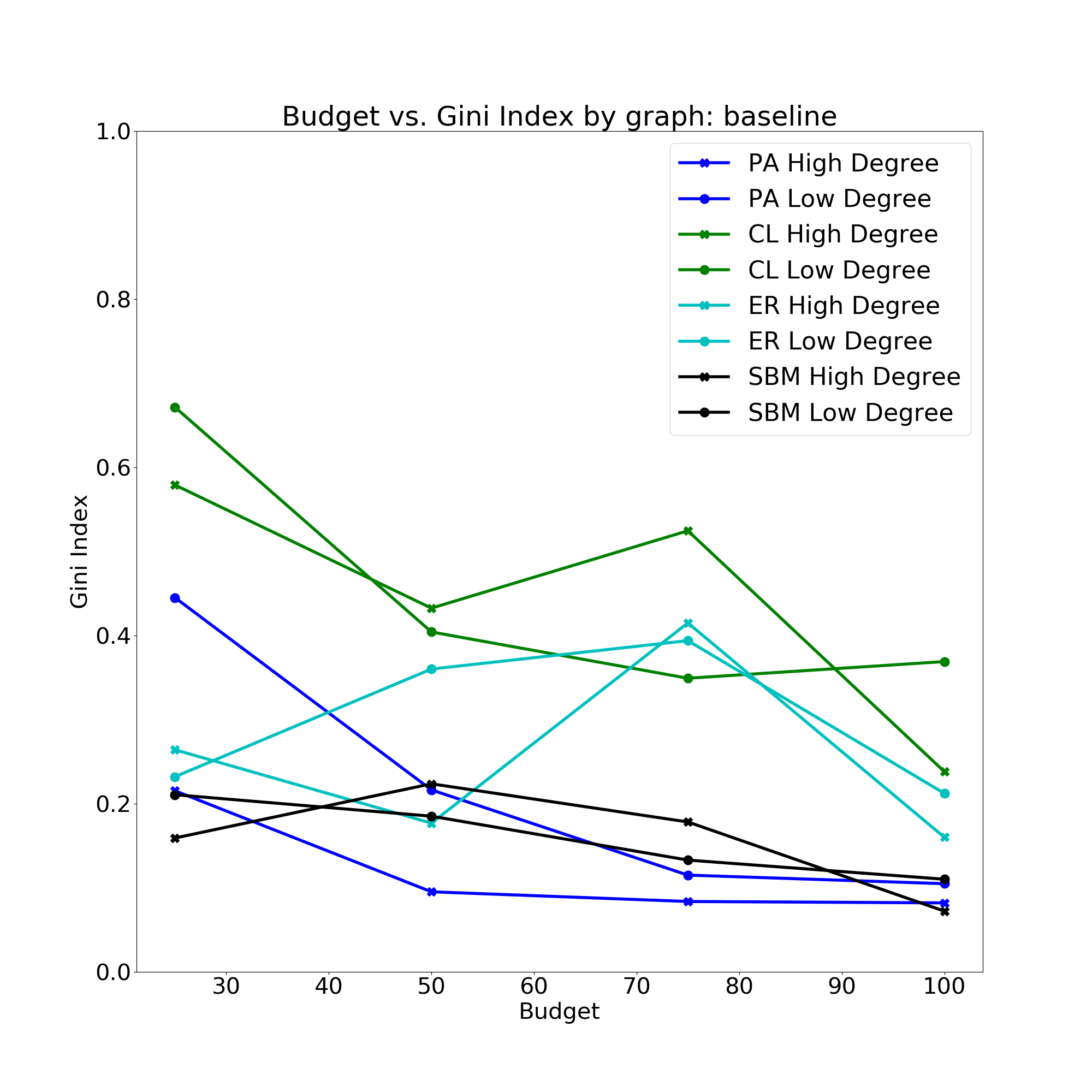}
  \caption{GECI baseline} 
  \label{subfig:budget_v_gini_1}
  \end{subfigure}
\begin{subfigure}{\threesubfig\textwidth}
  \includegraphics[width=\figwidth\columnwidth]{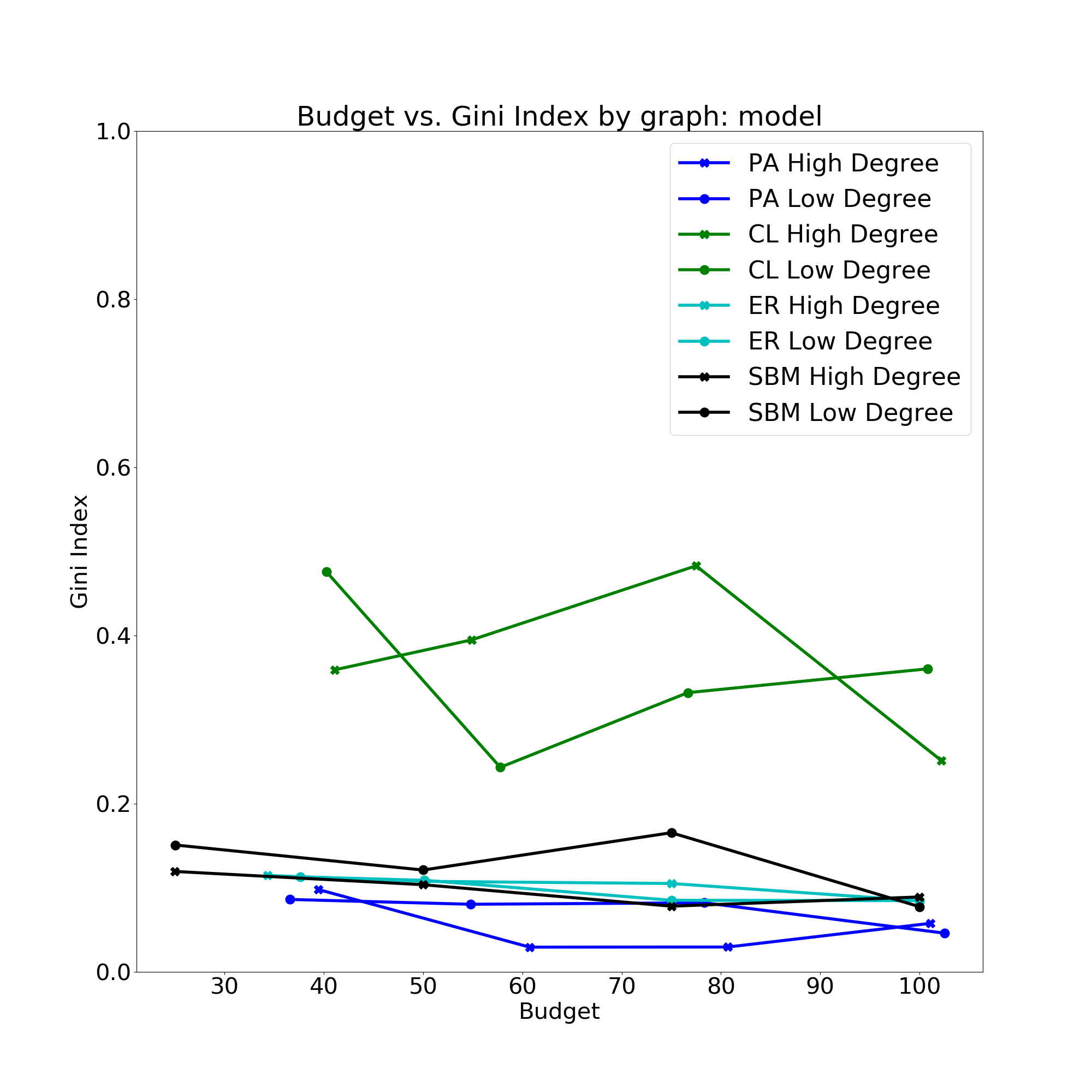}
  \caption{Proposed method} 
  \label{subfig:budget_v_gini_2}
  \end{subfigure}
 \begin{subfigure}{\threesubfig\textwidth}
  \includegraphics[width=\figwidth\columnwidth]{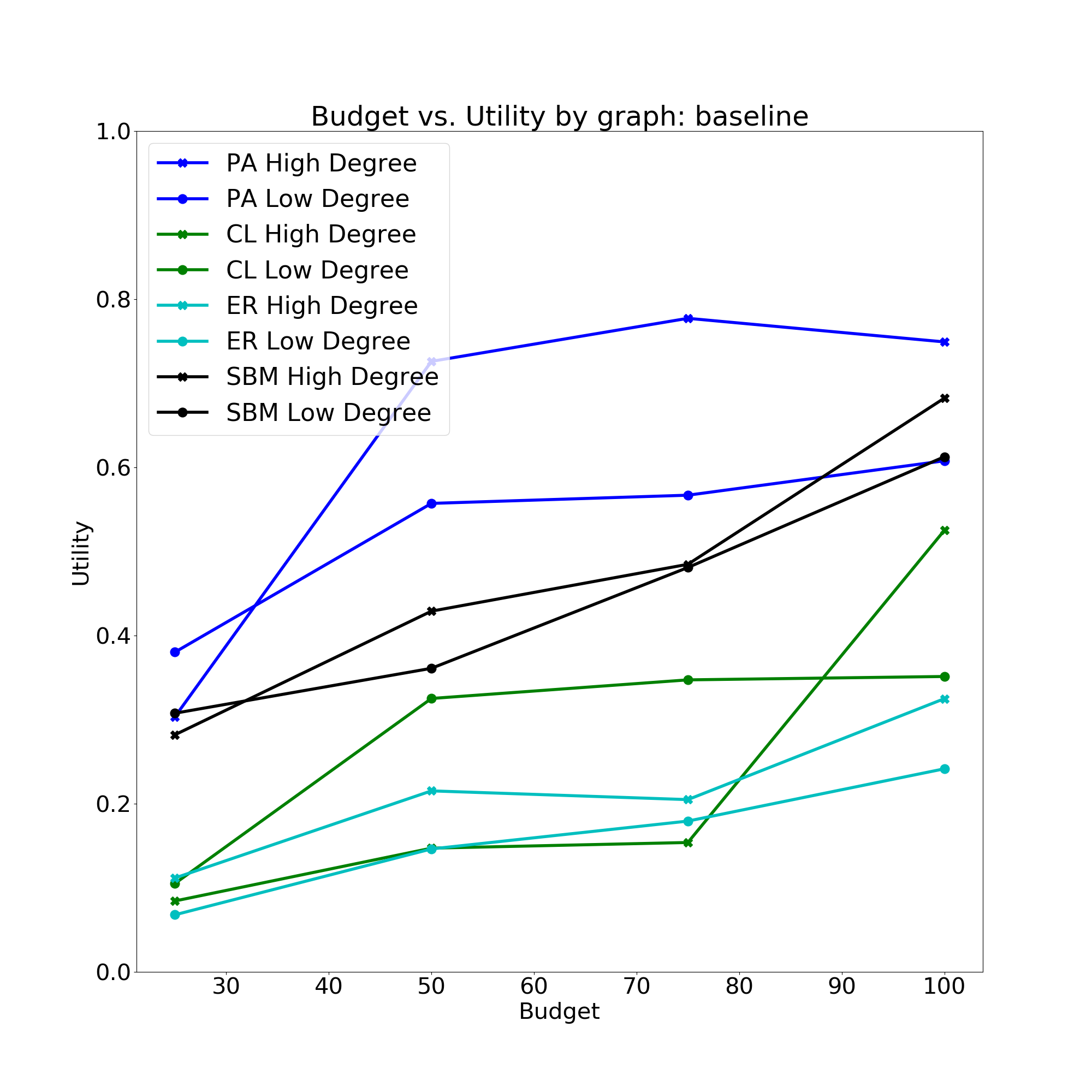}
  \caption{GECI baseline} 
  \label{subfig:budget_v_utility_1}
  \end{subfigure}
\begin{subfigure}{\threesubfig\textwidth}
  \includegraphics[width=\figwidth\columnwidth]{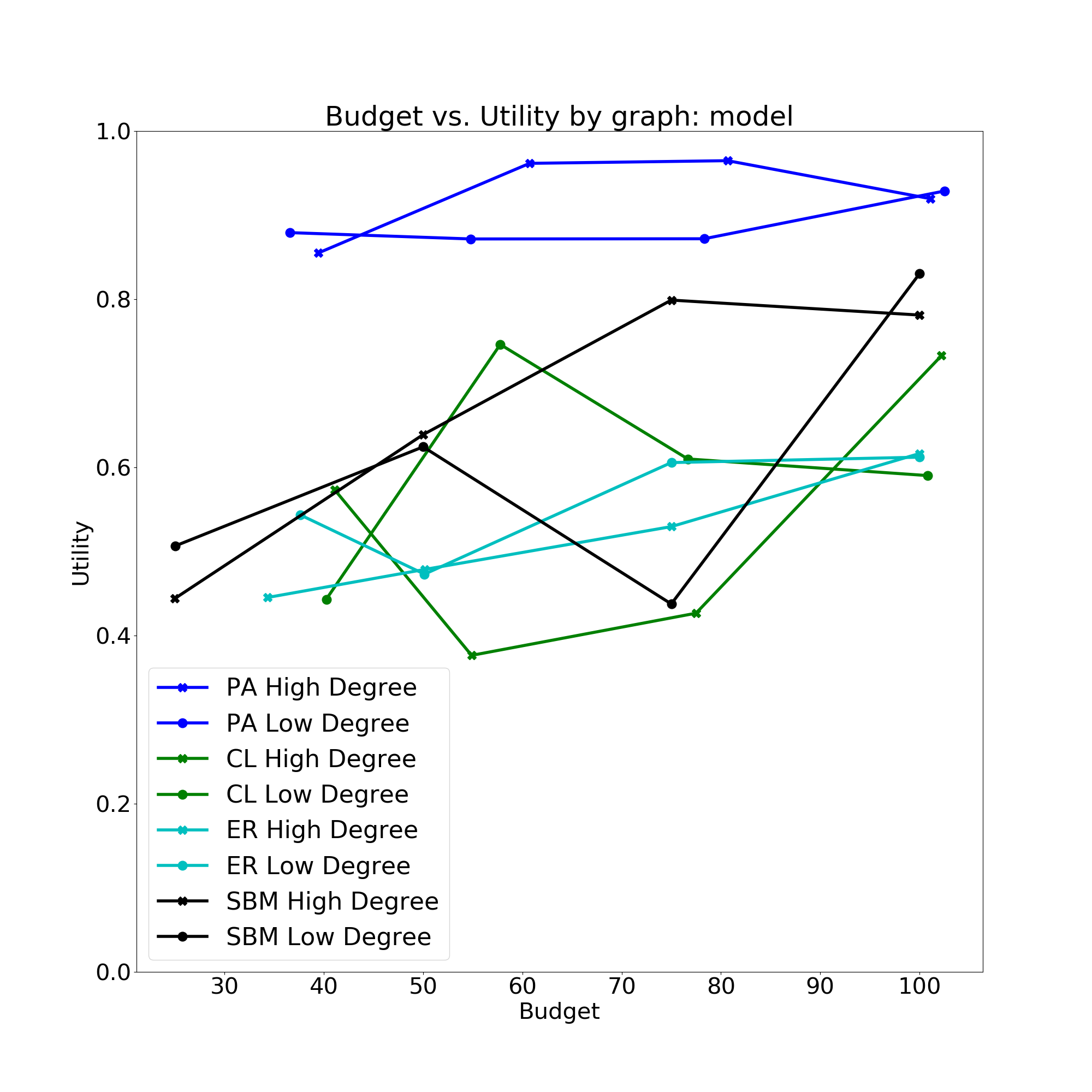}
  \caption{Proposed model} 
  \label{subfig:budget_v_utility_2}
  \end{subfigure}
\caption{(a-b) Budget vs Gini Index. This shows the Gini Index for varying budgets for (a)  the GECI baseline and (b) our proposed method. Our proposed model performs better, particularly at a smaller budget. (c-d) Budget vs Utility. This shows the utility for varying budgets for (c)  the GECI baseline and (d) our proposed method. Our proposed method outperforms the deterministic baseline on all graphs at all budgets.}
\label{fig:budget_v_gini}
\end{figure*}

\subsection{Evaluation}
We evaluate equity and utility for the graph produced by our method, comparing against the input graph and the baseline editing method. To define utility, we estimate the expected reward per group by repeated Monte Carlo sampling of weighted walks through the graph. First, we sample the starting node of an individual with respect to their initial distribution, then estimate their expected reward over weighted walks from the starting node. Repeatedly sampling individuals yields an expected utility for the graph with respect to each group. We measure the total expected reward per population (utility), and the difference in expected reward between classes (equity). Further, while our model only optimizes the expectation, it performs surprisingly well at minimizing the Gini Index. 

\subsection{Evaluation Metrics}
\phantomsection

\subsubsection{Average Reward}
We measure three graphs in experiments: the initial graph before editing, and outputs of the GECI baseline and our proposed method. We simulate 5000 weighted walks by the initial distributions of each particle type. Average reward is aggregated across these particle types.

\subsubsection{Gini Index}
The Gini Index  is a measure of inequality. It measures the cumulative proportion of a population vs. the cumulative share of value (e.g. reward) for the population. At equality, the cumulative fraction of the population is equal to the cumulative reward. The measure is the deviation from this $x=y$ line, with $1$ being total inequality, and $0$ total equality.

\subsection{Synthetic Results}
Figure \ref{fig:budget_v_gini} overviews our synthetic results. We see that on all graphs and over almost all budgets, the proposed model outperforms the baseline. Further, we especially see that in the low-budget scenario, our model outperforms on Gini Index. Our model outperforms the baseline as much as $0.5$ in utility under the same budget. In particular, PA and ER graphs improve the most. Figure \ref{fig:scatter_full} gives the main empirical result of our synthetic experiments. On the $8$ experiments of $4$ different synthetic graphs, we plot the utility vs. the Gini Index across Monte Carlo simulations of the original, baseline, and proposed method graphs. The bottom-right reflects the best performance. Notice the proposed model outperforms the baseline on all synthetic graphs. Particularly notable, the Chung-Lu Power Law graph is the worst performing original graph in terms of both utility and Gini Index. However, on the low-degree problem instance, our method nearly doubles the utility performance of the baseline.

\begin{figure}
\centering
  \begin{subfigure}{\threesubfig\textwidth}
  \centering
      \includegraphics[width=\columnwidth]{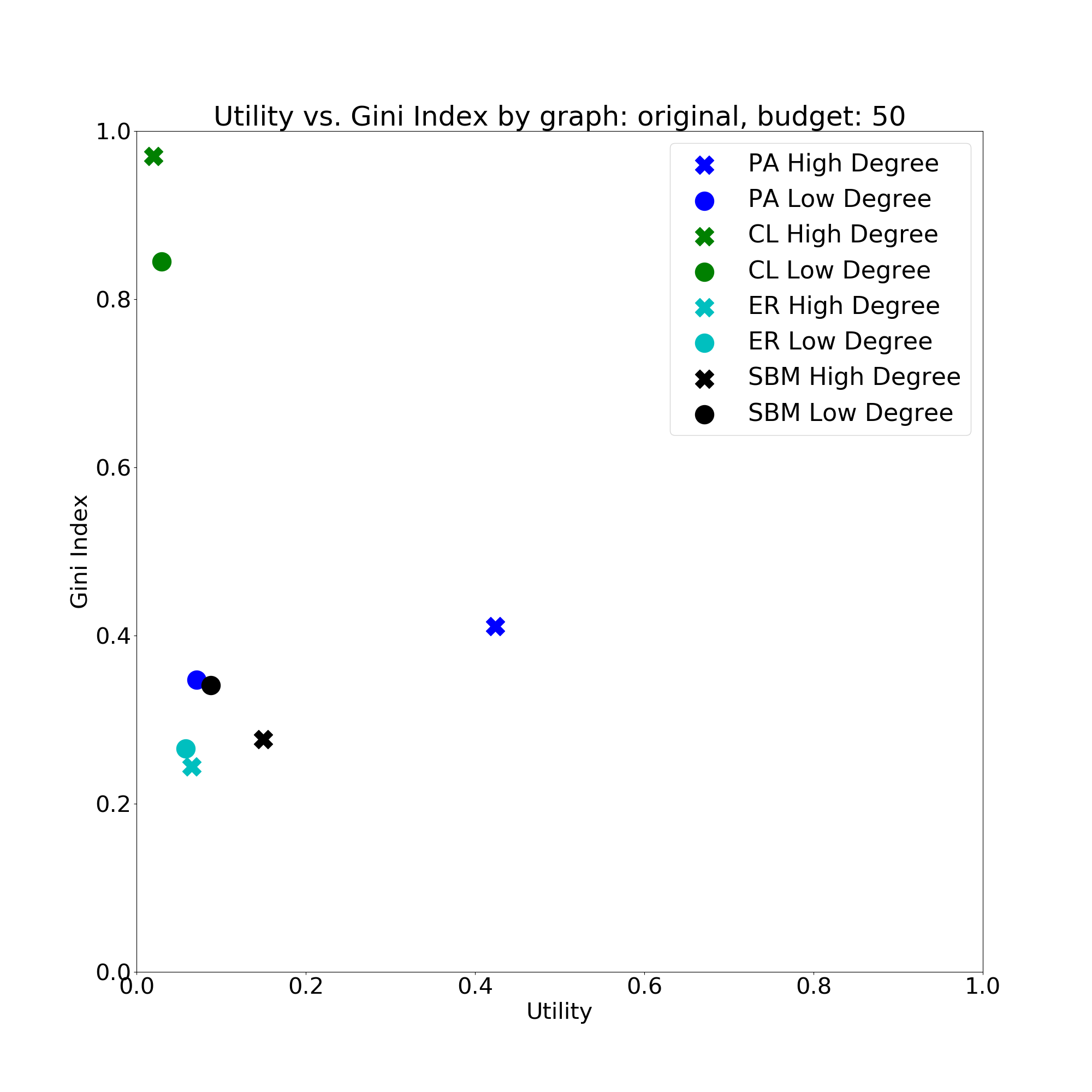}
  \caption{Unedited Graph}
  \label{subfig:scatter1}
  \end{subfigure}
    \begin{subfigure}{\threesubfig\textwidth}
  \centering
  \includegraphics[width=\columnwidth]{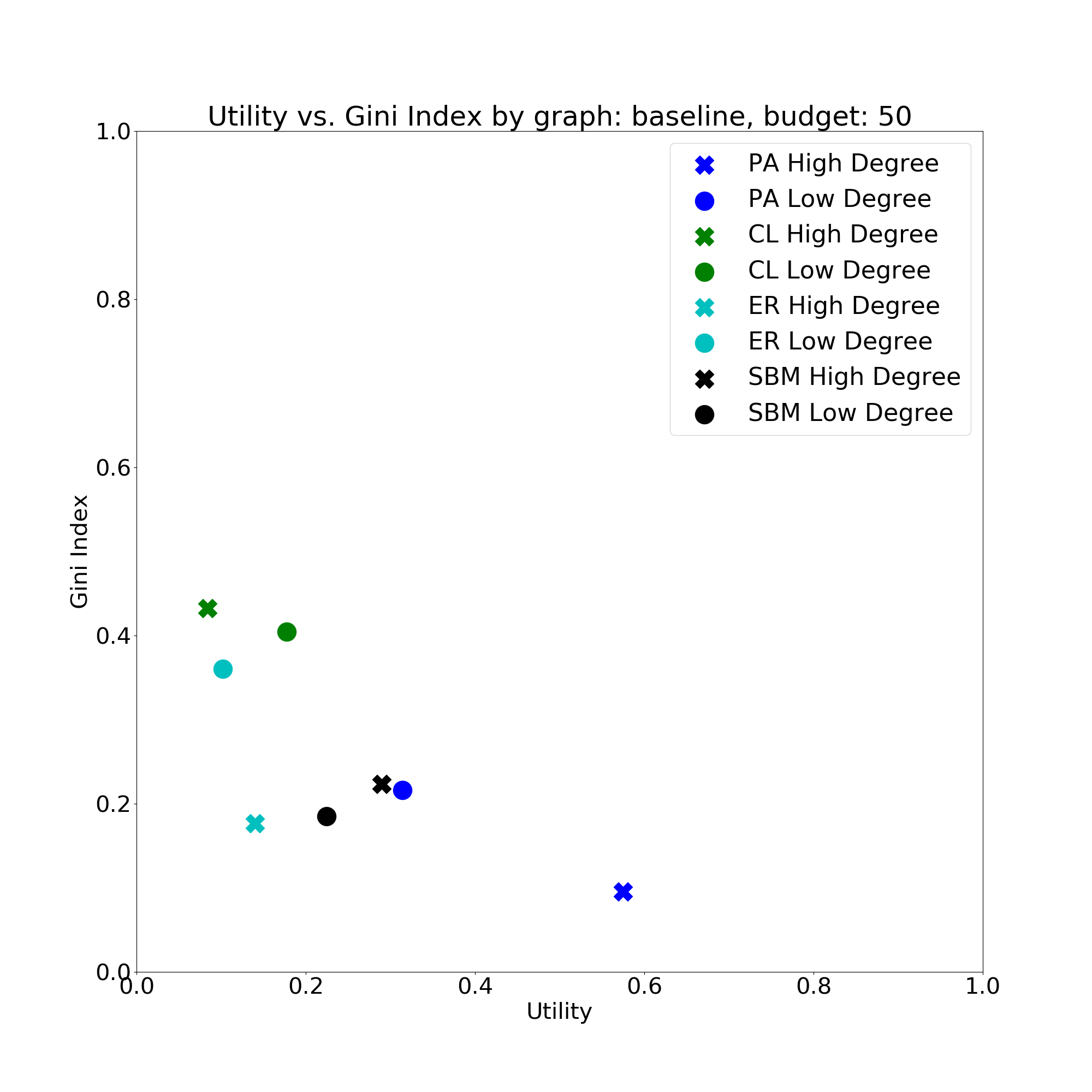}
  \caption{GECI Baseline}
  \label{subfig:scatter2}
  \end{subfigure}
   \begin{subfigure}{\threesubfig\textwidth}
    \centering
        \includegraphics[width=\columnwidth]{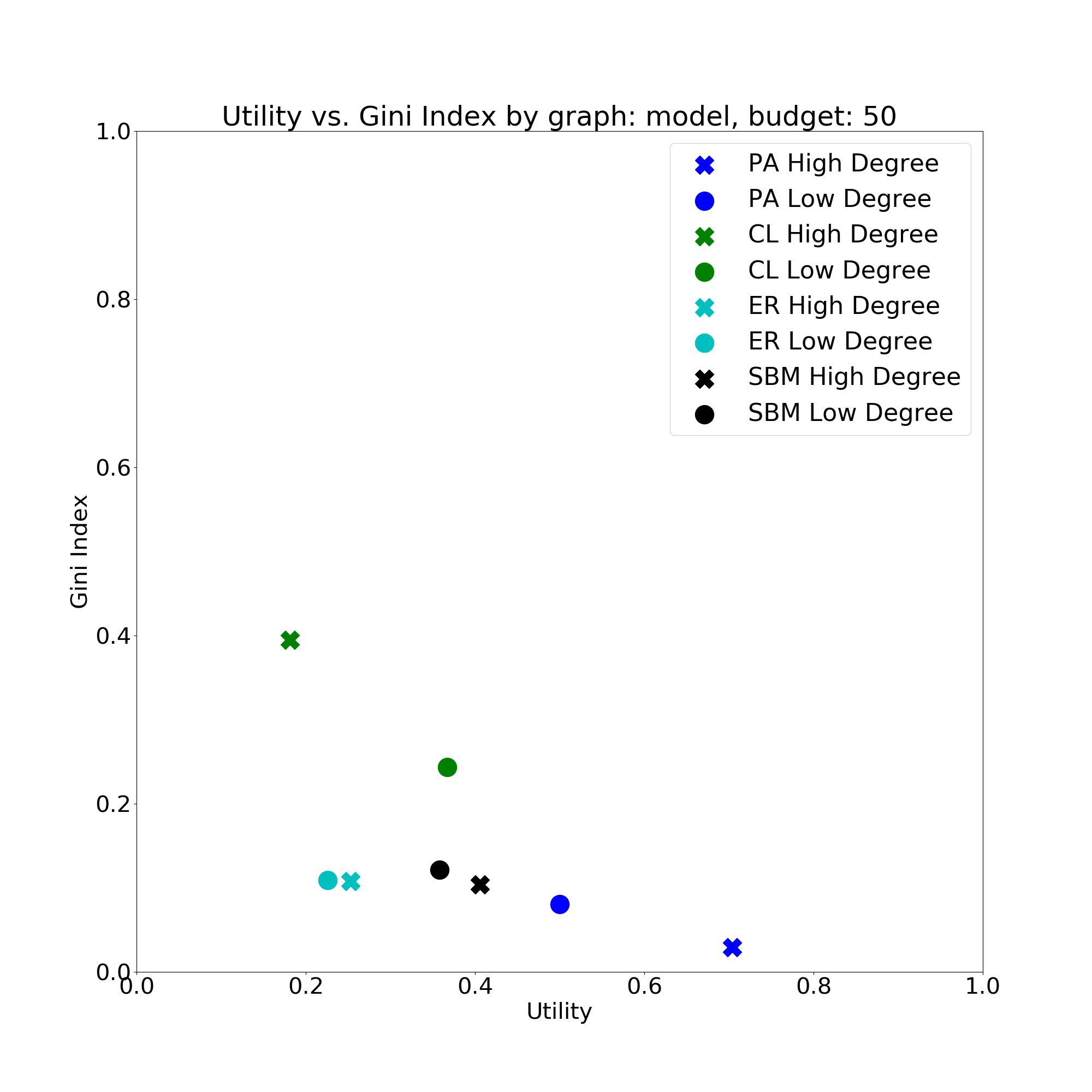}
  \caption{Proposed method}
  \label{subfig:scatter3}
  \end{subfigure} 
  \caption{A comparison of Utility vs.\ Gini Index across 8 synthetic experiments. Bottom-right is best. The proposed method performs best on all experiments.}
  \label{fig:scatter_full}
 \end{figure}

\subsubsection{Facility Placement}
As discussed, our proposed model also solves the facility placement problem. The problem selects a number of nodes which maximizes the reward for particles sampled onto the graph from initial distributions. Figure \ref{fig:facility_placement} shows a simple experiment adapting our model for this problem. For brevity, we only show a small example experiment. In black, we see the curve of the Gini Index decreasing on increased budget of $15$ for a synthetic PA graph of size $|N| = 200$. At the same time, the average utility increases to approximately the same budget. Note the initial location of PA high-degree under budget $3$ using the greedy PA high-degree heuristic (Figure \ref{subfig:scatter1}). This is approximately 4 for both Gini Index and Reward. Our method maintains far lower Gini Index.
Therefore this node set largely covers the transition dynamics of the initial distributions. 
\begin{figure}
      \centering
      \includegraphics[width=\columnwidth]{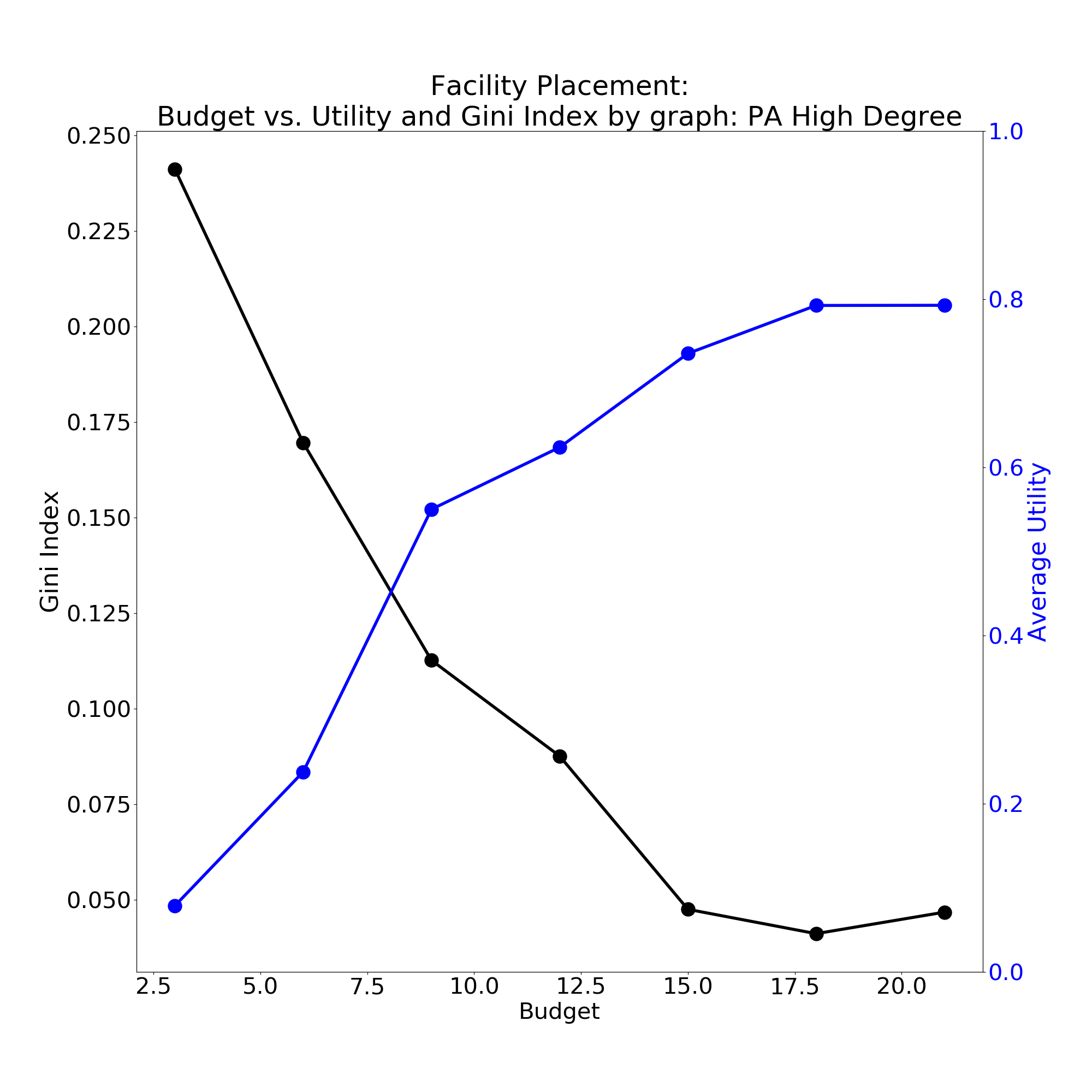}
      \caption{Facility placement results showing varying budget (facilities) vs. total Gini index and utility.}
    \label{fig:facility_placement}
\end{figure}
\begin{table}
\centering
\begin{tabular}{c||ccc}
\multicolumn{4}{c}{\textbf{Chicago Schools}} \\
\hline
              & Initial  & GECI  & Proposed Model \\
\hline\hline
Avg. Utility  & 0.20     & 0.21      & 0.90  \\
Gini Index    & 0.62     & 0.65      & 0.07 
\end{tabular}
\caption{Chicago Public School with budget 400}
\label{tab:chicago}
\end{table}

\section{Real-World Applications}
\phantomsection

\subsection{Equitable School Access in Chicago}
\label{subsec:chicago}

In this section we study school inequity in the city of Chicago. We infer a coarse transportation network using the trajectory data of public bus lines from the Chicago Transit Authority (CTA).\footnote{https://www.transitchicago.com/data/} Nodes are given by route intersections, and edges are inferred from neighboring preceding and following intersections.  This yields a graph with $2011$ nodes and $7984$ edges. We collect school location and quality evaluation data from the Chicago Public School (CPS) data portal.\footnote{https://cps.edu/SchoolData/} We use the $2018$-$2019$ School Quality Rating Policy assessment and select elementary or high schools with an assessment of "Level 1+," corresponding to ``exceptional performance'' of schools over the $90$th percentile. We select only non-charter, ``network'' schools which represent typical public schools. We use geolocation provided by CPS to create nodes within the graph. We attach these nodes to the graph using $2$-nearest neighbor search to the transportation nodes. Finally, we collect tract-level demographic data from the $2010$ census.\footnote{https://factfinder.census.gov/} We sample three classes of particle onto the network, representing White, Black, and Hispanic individuals by their respective empirical distribution over census tracts. We then randomly sample nodes within that tract to assign the particle's initial position. We equally set initial edge weights for all groups, with weights inversely proportional to edge distance. 

Table \ref{tab:chicago} shows the result for a budget of $400$ edges in the Chicago transportation network. We see that the baseline is surprisingly ineffective at increasing reward. Our method successfully optimizes for both utility and equity,  achieving a very high performance on both metrics. Note that both the baseline and our proposed model make the \textit{same number of edits} on the graph. We hypothesize the baseline performs poorly on graphs with a high diameter such as infrastructure graphs. Recall we similarly saw the baseline performs poorly on ER (Figure \ref{subfig:scatter2}), which has relatively dense routing. In contrast, our model learns the full reward function over the topology and can discover edits at the edge of its horizon.

\subsection{Equitable Access in University Social Networks}

To demonstrate the versatility of the proposed methods, we also apply them to reducing inequity in social networks. Social networks within universities and organizations may enable certain groups to more easily access people with valuable information or influence. We report experiments for university social networks using the Facebook100 data \cite{TraudMP2012}.
\begin{figure}
    \centering
    \begin{subfigure}{\bannerfig\columnwidth}
        \centering
        \includegraphics[width=\columnwidth]{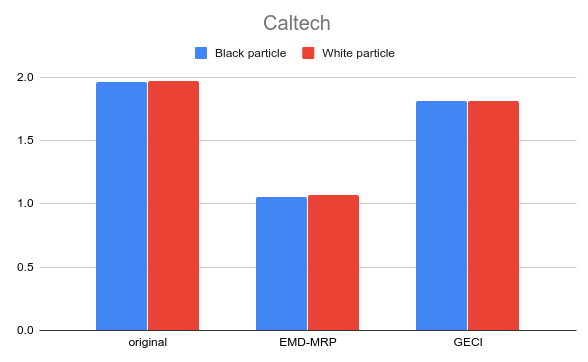}
        \caption{Caltech.}
       \label{MSP_Clatech}
    \end{subfigure} 
    \begin{subfigure}{\bannerfig\columnwidth}
        \centering
        \includegraphics[width=\columnwidth]{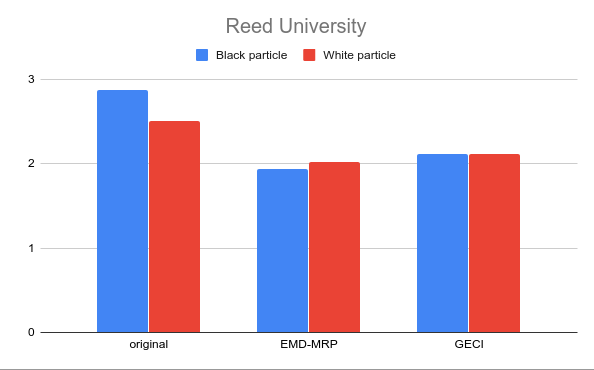}
        \caption{Reed College.}
        \label{MSP_Reed}
    \end{subfigure}
    \begin{subfigure}{\bannerfig\columnwidth}
        \centering
        \includegraphics[width=\columnwidth]{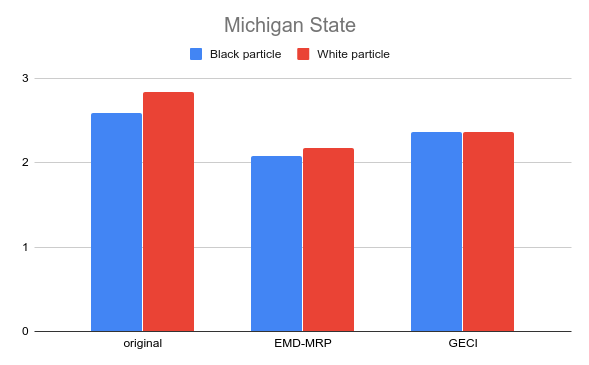}
        \caption{Michigan State.}
        \label{MSP_Michigan}
    \end{subfigure}
    \caption{Mean shortest path of gender groups from the influence nodes}
\end{figure}   
The Facebook100 dataset contains friendship networks at 100 US universities at some time in 2005. The node attributes of this network include: dorm, gender, graduation year, and academic major. Analyzing Facebook networks of universities yield sets of new social connections that would increase equitable access to certain attributed nodes across gender groups. We define popular seniors as the reward nodes and the objective is for freshmen of both genders to have equitable access to these influential nodes. In this experiment we mask the specific gender information by the term white and black particles. We demonstrate our method on three universities. 

The results are in Figures \ref{MSP_Clatech}, \ref{MSP_Reed}, and \ref{MSP_Michigan} which show the mean shortest path of gender groups from the influence nodes at Caltech, Reed College, and Michigan State University, respectively. Table~\ref{tab:gini1} shows the intra-group Gini index. With sufficient hyperparameter tuning, our RL method within our novel MDP framework consistently outperforms the greedy GECI baseline on intra-group Gini index and minimizing overall shortest path of the freshman from the influence node across groups. On the other hand the difference between average shortest distance between groups, GECI maintains tighter margin when compared to our EMD-MRP approach. This is explained in the slackness in the constrained optimization of the EMD-MRP approach.

\begin{table}[h]
\small
\begin{tabular}{c||ccc}
\multicolumn{4}{c}{} \\
\hline
              & Caltech  & Reed & Mich.\ State \\
\hline\hline
num.\ nodes, $|V| $  &  770    &   963    & 163806  \\
num.\ edges, $|E|$  &  33312    &  37624     &  163806 \\
num.\ editable edges $|A|$  &   336597   &  474439     & 3958660  \\
\end{tabular}
\caption{Graph properties of university social networks}
\label{tab:gini1}
\end{table}

\begin{table}[h]
\small
\begin{tabular}{c||ccc}
\multicolumn{4}{c}{} \\
\hline
              &  Original & EMD-MRP & GECI \\
\hline\hline
Reed  &  0.214    &   0.093    & 0.153  \\
Caltech  &  0.092    &  0.065     &  0.812 \\
Michigan State  &   0.115   &  0.086     & 0.157  \\
\end{tabular}
\caption{Intra-group Gini Index}
\label{tab:gini2}
\end{table}

\section{Conclusion}

In this work, we proposed the \textit{Graph Augmentation for Equitable Access} problem, which entails editing of graph edges, to achieve equitable diffusion dynamics-based utility across disparate groups. We motivated this problem through the application of equitable access in graphs, and in particular applications, equitable access in infrastructure and social networks. We evaluated our method on extensive synthetic experiments on 4 different synthetic graph models and 8 total experimental settings. We also evaluated on real-world settings.

There are many avenues for future work.
First, our reward function is somewhat limiting. Ideally, individuals could collect rewards on the graph in a number of ways. Second, we measured a particular \textit{equal opportunity} fairness which has a practical mapping to our application setting. However, other definitions of group or subgroup-level fairness may not be easily translatable to a graph routing/exploration problem.

\section{Appendix}
In this appendix, we provide some further details on the experiments in the main text.
\phantomsection

\subsection{Training Trajectories}
Fig.~\ref{fig:training_trajectory} are the train trajectories on synthetic graphs. They indicate the trends of mean utility across, difference in utility across group and the budget constraint. The kinks in these learning curve are due to different scheduling schemes kicking in. Refer to Table \ref{tab:hyperparam} for these scheduling details for synthetic graph

\begin{figure*}
\centering
  \begin{subfigure}{.28\textwidth}
  \centering
  \includegraphics[width=\columnwidth]{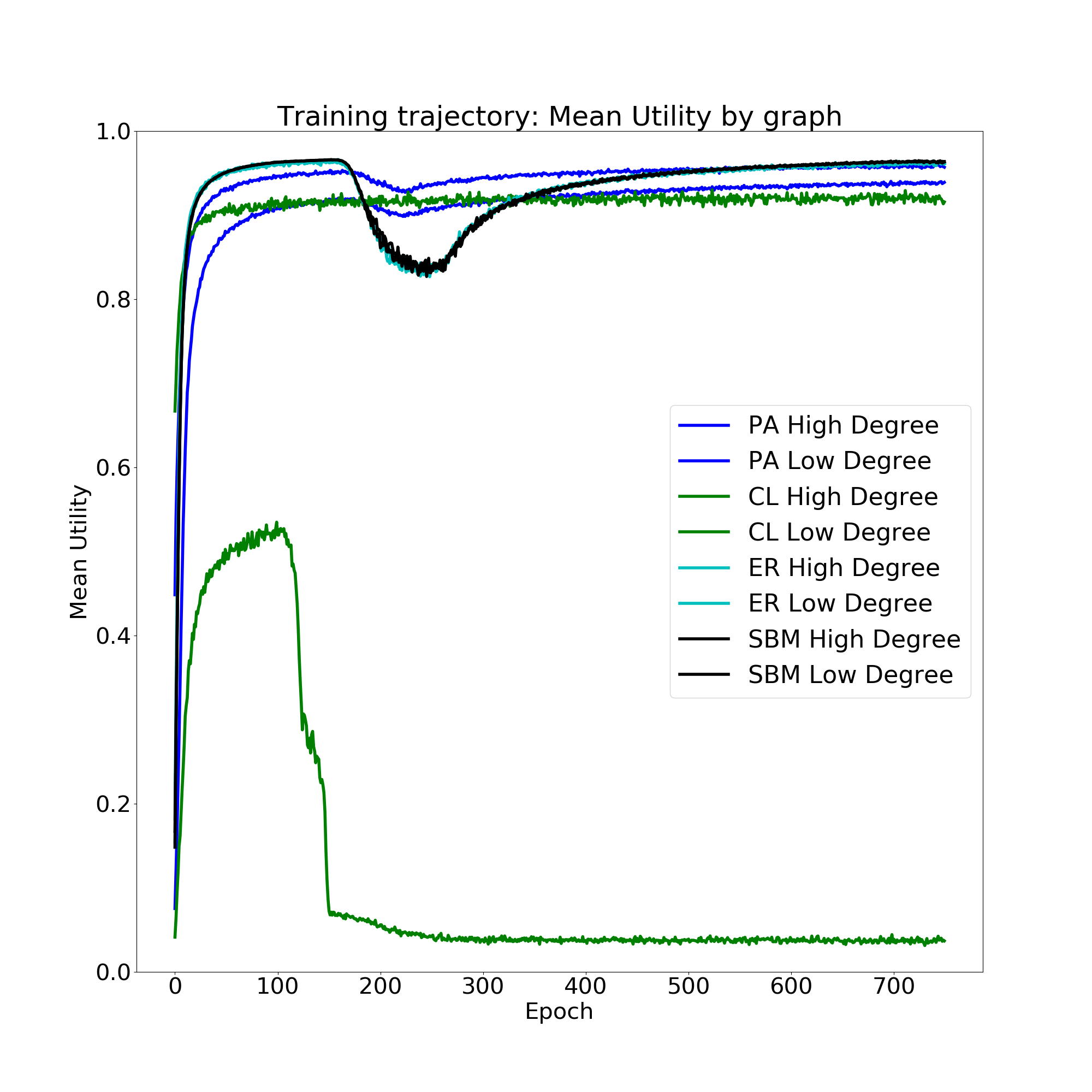}
  \caption{Mean Utility}
  \label{subfig:trajectory1}
  \end{subfigure}
    \begin{subfigure}{.28\textwidth}
  \centering
  \includegraphics[width=\columnwidth]{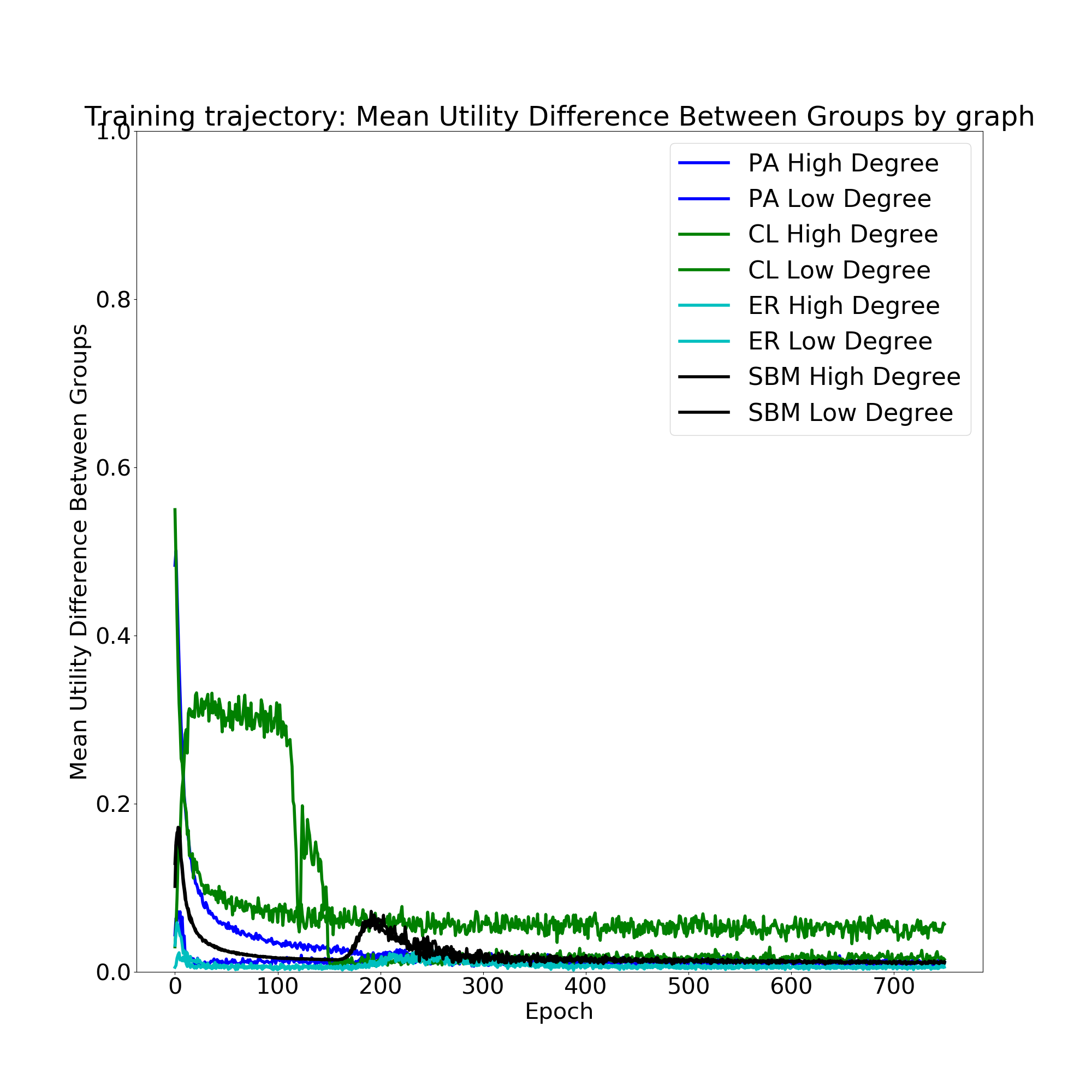}
  \caption{Mean Utility Difference between groups}
  \label{subfig:trajectory2}
  \end{subfigure}
   \begin{subfigure}{.28\textwidth}
    \centering
        \includegraphics[width=\columnwidth]{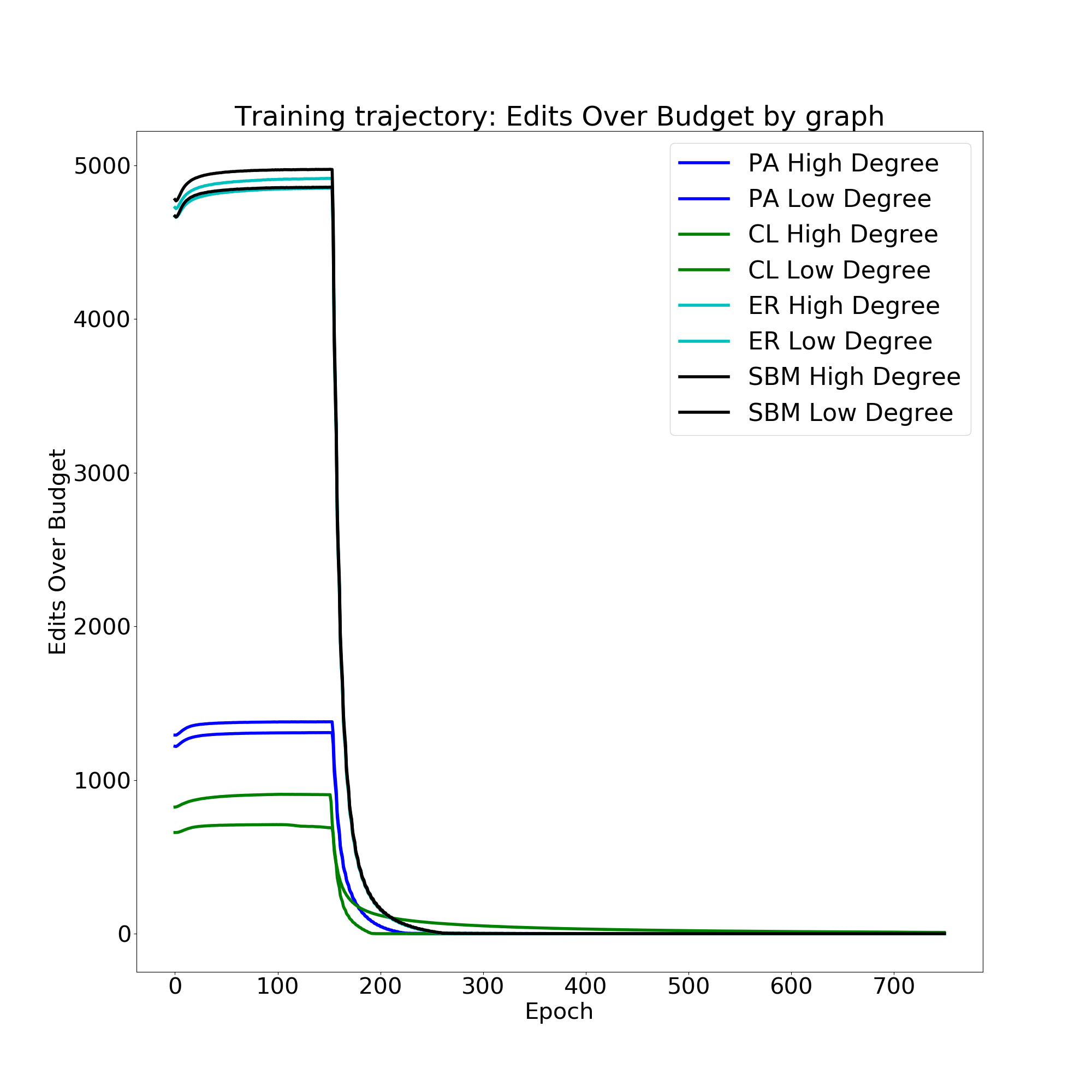}
  \caption{Number of edits over budget}
  \label{subfig:tajectory3}
  \end{subfigure} 
  \caption{Training trajectories for the proposed model and the effect of training schedules are visualized.}
  \label{fig:training_trajectory}
\end{figure*}
\subsection{Reproduciblity}
Table~\ref{tab:hyperparam} lists hyperparameters that were used for the different networks we experimented with.

\begin{table*}
\small
\centering
    \begin{tabular}{c||ccc}
    \multicolumn{4}{c}{\textbf{.}} \\

    \hline
                  & Synthetic Graphs  & Road Networks  & Social Networks \\
    \hline\hline
    Budget, $B$  &  400    &   400    & $0.2 |V|$  \\
    Number of groups, $|G|$  &  2    &  3     &  2 \\
    MRP event horizon, $T$  &   10   &  10     & 3  \\
    Discount Factor, $\gamma$  & 0.99      &     0.99  & 0.7  \\
    Budget Lagrangian learning coefficient, $\mu_1$  &  $0.1$    &    $1$   & $10000$  \\
    Equity Lagrangian learning coefficient, $\mu_2$  &  $1\times 10^{-6}$    &  $1\times 10^{-6}$     & $1\times 10^{-6}$  \\
    Number of epochs, $E$  &   700   &  500     & 1000  \\
    Equity constraint schedule  &   $E>0$   &  $E>0$     & $E>0$  \\
    Budget constraint schedule  &   $E>100$   &  $E>100$     & $E>100$   \\
    Graph augmentation discretization schedule, $\tau \to \nu . \tau$  &   $E>200$   &  $E>200$     & $E>300$  \\
    Temperature attenuation factor, $\nu$  &  0.999    &    0.995   & 0.995  \\
\end{tabular}
\caption{Hyperparameters used for different networks.}
\label{tab:hyperparam}
\end{table*}


\end{document}